\newcolumntype{L}{>{\raggedright\arraybackslash}X}
\numberwithin{equation}{section}
\newtheorem{theorem}{Theorem}[section]
\newtheorem{lemma}[theorem]{Lemma}
\newtheorem{example}[theorem]{Example}
\newtheorem{defin}[theorem]{Definition}
\newcommand{\cN}{\mathcal{N}}
\newcommand{\cR}{\mathcal{R}}
\newcommand{\R}{\mathbb{R}}
\newcommand{\N}{\mathbb{N}}
\newcommand{\Prob}{\mathbb{P}}
\newcommand*{\E}{\mathbb{E}}
\DeclareMathOperator{\logdel}{\log\tfrac{1}{\delta}}
\renewcommand{\leq}{\leqslant}
\renewcommand{\geq}{\geqslant}
\providecommand{\norm}[1]{\lVert{#1}\rVert}
\newcommand{\pieta}{\pi_{\eta}}
\newcommand{\tpieta}{\tilde{\pi}_{\eta}}
\newcommand{\Sd}{\mathcal{S}^{d-1}}
\def\blfootnote{\gdef\@thefnmark{}\@footnotetext}
\begin{document}
   
   	\title{Concentration of the Langevin Algorithm's Stationary Distribution}
		
	\author{
		Jason M. Altschuler\footnote{JMA was partially supported by an NSF Graduate Research Fellowship, a TwoSigma PhD Fellowship, and an NYU Data Science Faculty Fellowship. 
    }\\
		UPenn \\
		\texttt{alts@upenn.edu}
		\and
		Kunal Talwar \\
		Apple \\
		\texttt{ktalwar@apple.com}
	}
	\date{}
	\maketitle


\begin{abstract}
	A canonical algorithm for log-concave sampling is the Langevin Algorithm, aka the Langevin Diffusion run with some discretization stepsize $\eta > 0$. This discretization leads the Langevin Algorithm to have a stationary distribution $\pieta$ which differs from the stationary distribution $\pi$ of the Langevin Diffusion, and it is an important challenge to understand whether the well-known properties of $\pi$ extend to $\pieta$. 
	In particular, while concentration properties such as isoperimetry and rapidly decaying tails are classically known for $\pi$, the analogous properties for $\pieta$ are open questions with algorithmic implications. This note provides a first step in this direction by establishing concentration results for $\pieta$ that mirror classical results for $\pi$. Specifically, we show that for any nontrivial stepsize $\eta > 0$, $\pieta$ is sub-exponential (respectively, sub-Gaussian) when the potential is convex (respectively, strongly convex). Moreover, the concentration bounds we show are essentially tight. We also show that these concentration bounds extend to all iterates along the trajectory of the Langevin Algorithm, and to inexact implementations which use sub-Gaussian estimates of the gradient. 
	\par Key to our analysis is the use of a rotation-invariant moment generating function (aka Bessel function) to study the stationary dynamics of the Langevin Algorithm. This technique may be of independent interest because it enables directly analyzing the discrete-time stationary distribution $\pieta$ without going through the continuous-time stationary distribution $\pi$ as an intermediary.
\end{abstract}


\section{Introduction}\label{sec:intro}

Sampling from a log-concave distribution $\pi \propto e^{-f}$ over $\R^d$ is a foundational problem with applications throughout statistics, engineering, and the sciences.
A canonical and well-studied approach is the Langevin Algorithm, which is the discrete-time Markov process 
\begin{align}
	X_{t+1} = X_t - \eta \nabla f(X_t) + Z_t\,,
	\label{eq:LA}
\end{align}
where $\eta > 0$ is a stepsize parameter, and $Z_t \sim \cN(0, 2\eta I_d)$ are independent Gaussians. Briefly, the intuition behind the Langevin Algorithm is that it discretizes the Langevin Diffusion, which is a continuous-time Markov process with stationary distribution $\pi$. More precisely, the Langevin Diffusion is the stochastic differential equation
\begin{align}
	dX_t = - \nabla f(X_t) + \sqrt{2}d B_t\,,
	\label{eq:LD}
\end{align}
where $B_t$ is a standard Brownian motion on $\R^d$, and the Langevin Algorithm is identical except that it updates the gradient $\nabla f(X_t)$ every $\eta$ units of time rather than continuously. We refer to the textbooks~\citep{Chewi22,andrieu2003introduction,robert1999monte,jerrum1996markov,liu2001monte} for a detailed historical account of the Langevin Algorithm and the truly extensive literature surrounding it, which spans multiple decades and communities.

Although the discretization enables simulating the Langevin Diffusion algorithmically, it introduces an asymptotic bias. Specifically, for any discretization stepsize $\eta > 0$, the stationary distribution $\pieta$ of the Langevin Algorithm differs from the stationary distribution $\pi$ of the Langevin Diffusion. An important question is:
\[
\text{\emph{How similar are }} \pieta \text{\emph{ and }} \pi\text{?}
\]

\paragraph*{Concentration?} 
The motivation of this note is that this question is wide open for concentration properties. 
Indeed, while concentration properties for $\pi$ have been classically known for several decades, the analogous properties for $\pieta$ have remained unknown. Perhaps the most notable example of this is isoperimetry: while classical results imply that $\pi$ satisfies Poincar\'e/log-Sobolev isoperimetric inequalities when the potential $f$ is convex/strongly-convex~\citep{bakry1985diffusions,bobkov2003spectral,kannan1995isoperimetric}, the analogous properties for $\pieta$ are unknown. The influential paper~\citep{VempalaW19} stated this as an open question and proved rapid mixing of the Langevin Algorithm in R\'enyi divergence under this conjectural assumption of isoperimetry of $\pieta$. Another example is rapidly decaying tails: while classical results imply that  $\pi$ has sub-exponential/sub-Gaussian tails when the potential $f$ is convex/strongly-convex~\citep{ledoux1999concentration,MilmanBook}, the analogous properties for $\pieta$ were previously unknown. Such tail decay results can be exploited to extend fast-mixing results for the Langevin Algorithm from constrained settings to unconstrained settings~\citep{AltTal22mix}. 

\paragraph*{Contributions.} The purpose of this note is to provide a first step in this direction. Specifically, in \S\ref{sec:sc} we show that the stationary distribution $\pieta$ of the Langevin Algorithm is sub-Gaussian when the potential $f$ is strongly convex; and in \S\ref{sec:convex} we show that $\pieta$ is sub-exponential when $f$ is convex. These results hold for any nontrivial\footnote{
    Our results apply for $\eta =O(1/M)$. This is the relevant regime in both theory and practice since the Langevin Algorithm is transient if $\eta > 2/M$. In fact, $\eta$ is typically polynomially smaller than $O(1/M)$ to ensure small bias~\citep{Chewi22}.
} discretization stepsize $\eta > 0$ and mirror the aforementioned classical results for $\pi$. Moreover, the concentration bounds we show are essentially tight.
\par In \S\ref{sec:extensions}, we mention two extensions of these results: the concentration bounds extend to all iterates $X_t$ along the trajectory of the Langevin Algorithm, and/or to inexact implementations of the Langevin Algorithm which use sub-Gaussian estimates of the gradient. 

\paragraph*{Techniques.} A central obstacle for proving this result is that this requires establishing properties of $\pieta$ for any discretization stepsize $\eta > 0$, not just $\eta$ arbitrarily small. Since the difference between $\pi$ and $\pieta$ grows with $\eta$, we approach this problem by directly analyzing $\pieta$---rather than analyzing properties of $\pi$ and then trying to conclude properties of $\pieta$ via approximation bounds between $\pi$ and $\pieta$. However, directly analyzing $\pieta$ comes with the difficulty that many standard analysis techniques for the continuous-time Langevin Diffusion become much more complicated after discretization. To overcome this, our analysis makes use of a rotation-invariant moment generating function (aka Bessel function) to directly study the dynamics of the discrete-time Langevin Algorithm at stationarity. Briefly, the key point is that this Lyapunov function readily tracks the effect of gradient descent updates and Gaussian noise convolutions---the two components of the update equation~\eqref{eq:LA} for the Langevin Algorithm---thereby enabling a simple and direct analysis of this discrete-time process. This technique may be of independent interest and is developed in \S\ref{sec:mgf}.

\paragraph*{Independent work.} After completing an initial draft of this manuscript, we found out that a recent revision of~\citep{VempalaW19} (arXiv v4) has a new result (Theorem 8) which shows that in the strongly convex setting, $\pi$ satisfies LSI and thus is sub-Gaussian with concentration parameters that exactly match our Theorem~\ref{thm:sc}. 
The proof techniques are completely different, and the final results are incomparable in their level of generality. On one hand, their result subsumes Theorem~\ref{thm:sc} in that it shows LSI, which implies sub-Gaussianity. On the other hand, our techniques are more general in that they readily extend to the (non-strongly) convex setting, while the techniques of~\citep{VempalaW19} only apply to the strongly convex setting (it is stated as an open question in their paper whether one can obtain the analogous results for this more general setting). We are grateful to Andre Wibisono for making us aware of this recent revision and for discussing the differences with us.

\section{Preliminaries}\label{lem:prelim}

\paragraph*{Notation.}  We write $\Sd$ to denote the unit sphere in $\R^d$, and we abuse notation slightly by writing $v \sim \Sd$ to indicate that $v$ is a random vector drawn uniformly from $\Sd$. Throughout, $\|\cdot\|$ denotes the Euclidean norm. Recall that a differentiable function $f : \R^d \to \R$ is said to be $M$-smooth if $\nabla f$ is $M$-Lipschitz; and is said to be $m$-strongly convex if $f(y) \geq f(x) + \langle \nabla f(x), y-x \rangle + \frac{m}{2}\|x-y\|^2$ for all $x,y$. All logarithms are natural. All other notation is introduced in the main text.

\paragraph*{Integrability assumption.} Throughout, we assume that the convex potential $f : \R^d \to \R$ satisfies $\int e^{-f(x)} dx < \infty$ since this suffices to ensure that the stationary distribution of the Langevin Diffusion exists and is equal to $\pi \propto e^{-f}$~\citep{roberts1996exponential,bhattacharya1978criteria}. This assumption is automatically satisfied in the strongly convex setting, and is a mild assumption in the convex setting (e.g., it holds if there is a minimizer $x^*$ of $f$ and a large enough radius $R$ such that $f(x) > f(x^*)$ at all points $x$ satisfying $\|x-x^*\| = R$.)

\section{Lyapunov function}\label{sec:mgf}

We propose to use the following Lyapunov function in order to analyze $\pieta$. 

\begin{defin}[Lyapunov function]
	For any dimension $d \in \N$ and weight $\lambda > 0$, let $\Phi_{d,\lambda} : \R^d \to \R$ denote the function
	\begin{align}
	\Phi_{d,\lambda}(x) := \E_{v \sim \Sd} \left[ e^{\lambda \langle v, x \rangle} \right]\,.
    \label{eq:Phi-def}
	\end{align}
\end{defin}

This definition extends to random variables $X$ by taking the expectation $\Phi_{d,\lambda}(X) = \E_{x} [\Phi_{d,\lambda}(x)]$. The rest of this section discusses properties and interpretations of this Lyapunov function.

\subsection{Relation to rotation-invariant MGF}\label{ssec:mgf}

This Lyapunov function has a natural interpretation in terms of a rotationally-invariant version of the standard moment generating function (MGF). Specifically, rather than bounding the standard MGF of a distribution $\mu$---which we recall is
$\E_{X \sim \mu} e^{\lambda \langle v, X \rangle}\,,$
as a function of a fixed vector $v \in \Sd$---we will bound the average
$	\E_{X \sim \mu} 	\Phi_{d,\lambda}(X) =  \E_{v \sim \Sd} \E_{X \sim \mu} e^{\lambda \langle v, X \rangle}$. The latter is precisely the average of the standard MGF over all vectors $v \in \Sd$, hence why we refer to it as the ``rotation-invariant MGF''.

\par How do bounds on this rotation-invariant MGF imply concentration inequalities? Since properties like sub-Gaussianity and sub-exponentiality of a distribution are by definition equivalent to MGF bounds which hold uniformly over all vectors $v \in \Sd$~\citep{rigollet2015high}, these properties of course imply the same bounds for the rotationally-invariant MGF (simply average over $v \sim \Sd$). The converse is in general lossy, in that a bound on the rotationally-invariant MGF does not imply the same\footnote{It is possible to extract a weaker bound for the standard MGF, but this loses dimension-dependent factors, which leads to loose bounds. In contrast, our analysis is tight up to a small constant factor.} bound on the standard MGF uniformly over all vectors $v \in \Sd$. But this conversion is of course lossless for rotationally-invariant distributions $\mu$---and this suffices for the purposes of this note. 
\par Let us elaborate: in order to prove concentration inequalities for $\|X\|$ where $X$ is drawn from the (not rotationally-symmetric) distribution $\pieta$, it suffices to prove the same concentration inequalities for $\|\tilde{X}\|$, where $\tilde{X}$ is a randomly rotated version of $X$, namely $\tilde{X} = RX$ where $R$ is a random rotation. Since our analysis in the sequel lets us bound the rotationally-invariant MGF for the law of $\tilde{X}$, this immediately gives concentration inequalities for $\|\tilde{X}\|$ and thus also for $\|X\|$, as desired.

\par Briefly, the upshot of using this rotationally-invariant MGF is that, unlike the standard MGF, it is a function of the norm (see \S\ref{ssec:bessel})---and as we describe in the sequel, this enables exploiting contractivity of the gradient descent step in the Langevin Algorithm update. We also mention that since our goal is to prove concentration inequalities on the norm, of course no information is lost by using the rotational-invariant MGF rather than the standard MGF, since the only difference is averaging over directions.


\subsection{Explicit expression via Bessel functions}\label{ssec:bessel}

\par Of course, by rotational invariance, this Lyapunov function $\Phi_{d,\lambda}(x)$ depends on the point $x$ only through its norm $\|x\|$. That is,
\begin{align}
	\Phi_{d,\lambda}(x) := \phi_{d}(\lambda \|x\|)
	\label{eq:Phi-phi}
\end{align}
where $\phi_d : \R_{\geq 0} \to \R_{\geq 0}$ is the univariate function given by 
\begin{align}
	\phi_d(z) = \E_{v \sim \Sd} \left[ e^{z \langle v, e_1 \rangle} \right] \,.
	\label{eq:phi-def}
\end{align}
The following lemma provides an elegant closed-form expression for this univariate function $\phi_d$ in terms of modified Bessel functions $I_n(\cdot)$ of the first kind and the Gamma function $\Gamma(\cdot)$.

\begin{lemma}[Explicit formula for Lyapunov function]\label{lem:bessel}
	For any dimension $d \geq 2$ and argument $z > 0$,
	\[
		\phi_{d}(z) = \Gamma(\alpha+ 1) \cdot \left(\frac{2}{z} \right)^{\alpha} \cdot I_{\alpha}(z)\,,
	\]
	where $\alpha := (d-2)/2$. (In dimension $d=1$, we simply have $\phi_d(z) = \cosh(z)$.)
\end{lemma}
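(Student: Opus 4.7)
} The plan is to reduce the spherical integral defining $\phi_d(z)$ to a one-dimensional integral via the marginal density of $\langle v, e_1\rangle$, and then match it against the classical Poisson integral representation of the modified Bessel function $I_\alpha$.

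First, I would compute the marginal distribution of $U := \langle v, e_1\rangle$ when $v \sim \Sd$. A standard computation using the coarea formula (or equivalently the representation $v = (U, \sqrt{1-U^2}\,w)$ with $w$ uniform on $\mathcal{S}^{d-2}$ and $U$ independent of $w$) shows that $U$ has density
\[
    p_d(u) = \frac{\Gamma(d/2)}{\sqrt{\pi}\,\Gamma((d-1)/2)}\,(1-u^2)^{(d-3)/2}\,,\qquad u \in [-1,1]\,.
\]
Therefore
\[
    \phi_d(z) \;=\; \E[e^{zU}] \;=\; \frac{\Gamma(d/2)}{\sqrt{\pi}\,\Gamma((d-1)/2)} \int_{-1}^{1} e^{zu}\,(1-u^2)^{(d-3)/2}\,du\,.
\]

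Second, I would invoke the Poisson integral representation of the modified Bessel function of the first kind, which states that for any $\alpha > -1/2$ and $z > 0$,
\[
    I_\alpha(z) \;=\; \frac{(z/2)^\alpha}{\sqrt{\pi}\,\Gamma(\alpha + 1/2)} \int_{-1}^{1} e^{zu}\,(1-u^2)^{\alpha - 1/2}\,du\,.
\]
Setting $\alpha = (d-2)/2$, one has $\alpha - 1/2 = (d-3)/2$, so the integrand matches exactly, and also $\alpha + 1/2 = (d-1)/2$. Substituting the resulting expression for the integral into the formula for $\phi_d(z)$ and using $\Gamma(d/2) = \Gamma(\alpha + 1)$ gives
\[
    \phi_d(z) \;=\; \frac{\Gamma(\alpha + 1)}{(z/2)^\alpha}\,I_\alpha(z) \;=\; \Gamma(\alpha + 1)\left(\frac{2}{z}\right)^{\alpha} I_\alpha(z)\,,
\]
as claimed. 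The case $d=1$ follows directly from $\Sd = \{-1,+1\}$ with uniform law, giving $\phi_1(z) = \tfrac12(e^z + e^{-z}) = \cosh(z)$.

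There is no real obstacle here beyond recognizing that the spherical average reduces exactly to Poisson's integral representation of $I_\alpha$; once the marginal density of $\langle v, e_1\rangle$ is identified as a symmetric Beta-type density on $[-1,1]$, matching the two formulas is purely bookkeeping with the Gamma function. The only step that warrants care is verifying the normalization constant of $p_d$, which can be done either by evaluating $\int_{-1}^1 (1-u^2)^{(d-3)/2}\,du$ directly via the Beta function or by checking against the known ratio of surface areas of $\Sd$ and $\mathcal{S}^{d-2}$.
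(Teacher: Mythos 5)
Your proof is correct and follows essentially the same route as the paper: identify the marginal density of $\langle v, e_1\rangle$ for $v \sim \Sd$ and match the resulting one-dimensional integral against Poisson's integral representation of $I_\alpha$. In fact your normalization constant $\Gamma(d/2)/(\sqrt{\pi}\,\Gamma((d-1)/2)) = \Gamma(\alpha+1)/(\sqrt{\pi}\,\Gamma(\alpha+1/2))$ is the correct one --- the paper's displayed density has a typo ($\Gamma(\alpha)$ in the denominator where $\Gamma(\alpha+1/2)$ should appear), although its final formula is right.
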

\begin{proof}
	It is a classical fact (see, e.g.,~\citep{Szego39}) that if $v$ is drawn uniformly at random from the unit sphere $\Sd$, then the induced law $\nu_{d-1}$ of $v_1$ has density
	\begin{align*}
		\frac{d\nu_{d-1}(t)}{dt} = \frac{\Gamma(\alpha + 1)}{\sqrt{\pi} \Gamma(\alpha)} (1-t^2)^{\alpha- 1/2} \mathds{1}_{t \in (-1,1)}\,.
	\end{align*}
	Therefore
	\begin{align*}
			\Phi_{d,\lambda}(x)
			=
			\E_{v \sim \Sd} \left[ e^{\lambda \langle v, x \rangle} \right]
			=
			\E_{v \sim \Sd} \left[ e^{z v_1 } \right]
			= 
			 \frac{\Gamma(\alpha + 1)}{\sqrt{\pi} \Gamma(\alpha)}
			\int_{-1}^1 e^{zt} (1-t^2)^{\alpha- 1/2} dt\,.
	\end{align*}
	We conclude by using the Poisson integral representation 
	\begin{align*}
			I_{\alpha}(z) = \frac{(z/2)^{\alpha}}{\sqrt{\pi}\Gamma(\alpha + 1/2)} \int_{-1}^1 e^{zt} (1-t^2)^{\alpha- 1/2} dt\,.
	\end{align*}
	of the Bessel functions, see, e.g.,~\citep[equation (10.32.2)]{DLMF}.
\end{proof}

\subsection{Properties of the Lyapunov function}

The key reason we use the Lyapunov function $\Phi$ to analyze the dynamics of the Langevin Algorithm is that the Langevin Algorithm's update decomposes into a gradient descent step and a noise convolution step, and $\Phi$ enables tracking the effect of both steps in an elegant manner. 
\par The following lemma describes how $\Phi$ is affected by the noise convolution. While a similar property holds for the standard MGF, the advantage of the rotation-invariant MGF over the standard MGF is that it enables exploiting contractivity of the gradient descent step. However, the effect of the gradient descent step is slightly different depending on whether the potential is just convex or also strongly convex, and thus is treated separately in the corresponding sections below.

\begin{lemma}[Behavior of $\Phi$ under Gaussian convolution]\label{lem:convolve}
	For any dimension $d \in \N$, point $x \in \R^d$, weight $\lambda > 0$, and noise variance $\sigma^2$, 
	\[
	\E_{Z \sim \cN(0, \sigma^2 I_d)} \left[ \Phi_{d,\lambda}\left( x + Z \right) \right] = e^{\lambda^2\sigma^2/2} \cdot \Phi_{d,\lambda}(x)\,.
	\]
\end{lemma}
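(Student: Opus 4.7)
The plan is to unfold the definition of $\Phi_{d,\lambda}$, interchange the order of the expectations over $Z$ and $v$, and then reduce to a one-dimensional Gaussian MGF computation. Specifically, I would write
\[
\E_{Z \sim \cN(0,\sigma^2 I_d)} \Phi_{d,\lambda}(x+Z)
= \E_{Z} \E_{v \sim \Sd} e^{\lambda \langle v, x+Z\rangle}
= \E_{v \sim \Sd} e^{\lambda \langle v, x\rangle}\,\E_{Z} e^{\lambda \langle v, Z\rangle},
\]
where the swap is justified by Tonelli since the integrand is nonnegative (and the product factorization works because $e^{\lambda \langle v, x\rangle}$ is deterministic once $v$ is fixed).

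Next I would observe that for each fixed $v \in \Sd$, the scalar random variable $\langle v, Z \rangle$ is distributed as $\cN(0, \sigma^2\|v\|^2) = \cN(0, \sigma^2)$, since $v$ has unit norm. Hence by the standard Gaussian MGF formula,
\[
\E_{Z \sim \cN(0,\sigma^2 I_d)} e^{\lambda \langle v, Z\rangle} = e^{\lambda^2 \sigma^2 / 2},
\]
independently of $v$. Pulling this factor outside the expectation over $v$ and recognizing the remaining integral as $\Phi_{d,\lambda}(x)$ yields the claimed identity.

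There is no serious obstacle here: the whole point of the rotation-invariant MGF is that it is an average over unit vectors of standard MGFs, and under Gaussian convolution every such standard MGF acquires the same multiplicative factor $e^{\lambda^2\sigma^2/2}$. The only thing to check carefully is that the expectations may indeed be swapped, which follows from nonnegativity of the exponential (or, alternatively, from a direct dominated convergence argument using that $e^{\lambda \langle v,Z\rangle} \leq e^{\lambda \|Z\|}$, whose Gaussian expectation is finite).
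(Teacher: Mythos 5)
Your proof is correct and follows essentially the same route as the paper's: unfold the definition, swap the two expectations (the paper cites Fubini; your Tonelli/nonnegativity justification is equally valid), and apply the Gaussian MGF formula with $\|v\|=1$. No gaps.
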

\begin{proof}
	We compute:
	\begin{align*}
		\E_{Z \sim \cN(0, \sigma^2 I_d)}[\Phi_{d,\lambda}(x+Z)]
		&= 	\E_{Z \sim \cN(0, \sigma^2 I_d)}\E_{v \sim \Sd} \left[ e^{\lambda \langle v, x+Z \rangle} \right]\\
		&= \E_{v \sim \Sd} \left[ e^{\lambda \langle v, x \rangle} \E_{Z \sim \cN(0, \sigma^2 I_d)} [e^{\lambda \langle v, Z\rangle}] \right]\\
		&= \E_{v \sim \Sd} \left[ e^{\lambda \langle v, x \rangle} e^{\lambda^2\sigma^2 \|v\|^2 /2} \right]\\
		&= e^{\lambda^2\sigma^2/2} \cdot \Phi_\lambda(x).
	\end{align*}
	Above, the first and last steps are by definition of $\Phi$; the second step is by Fubini's Theorem; and the third step is by the well-known formula for the moment generating function of the multivariate Gaussian distribution (see, e.g.,~\citep[\S5.8]{grimmett2020probability}).
\end{proof}

The next lemma collects various basic properties of $\Phi$ that we use in our analysis in the sequel. We remark that in the setting of strongly convex potentials, our analysis only makes use of the positivity and monotonicity in item (i) of this lemma; the setting of convex potentials has a more involved analysis and requires the eventual exponential growth in item (ii) of this lemma. 

\begin{lemma}[Properties of rotation-invariant MGF]\label{lem:phi-properties}
	For any dimension $d \in \N$:
	\begin{itemize}
		\item[(i)]  The function $\phi_{d}$ is positive and increasing on $\R_{\geq 0}$. 
		\item[(ii)] There exists some $r_0 := r_0(d)$ such that
		\[
			\phi_d(r + \Delta) \geq e^{\Delta/2} \phi_d(r)\,
		\]
		for all $r \geq r_0$ and $\Delta > 0$.
	\end{itemize}
\end{lemma}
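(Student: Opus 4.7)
The plan is to treat (i) and (ii) separately, with (ii) requiring the bulk of the work.

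For (i), positivity of $\phi_d$ is immediate from the definition \eqref{eq:phi-def} as an expectation of a strictly positive quantity. Monotonicity follows by differentiating under the expectation to obtain $\phi_d'(z) = \E_{v \sim \Sd}[v_1 e^{z v_1}]$, and then using the rotational symmetry of $\Sd$, which forces the law of $v_1$ to be symmetric about zero. I would pair the contributions at $t$ and $-t$ in the integral representation, reducing the integrand to $t(e^{zt}-e^{-zt})$, which is nonnegative for all $z \geq 0$.

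For (ii), the strategy is to show that the logarithmic derivative $(\log \phi_d)'(z)$ tends to $1$ as $z \to \infty$, so that there exists $r_0 = r_0(d)$ with $(\log \phi_d)'(s) \geq 1/2$ for every $s \geq r_0$. Granting this, the desired inequality follows simply by integrating:
\[
\log \phi_d(r+\Delta) - \log \phi_d(r) = \int_r^{r+\Delta} (\log \phi_d)'(s)\, ds \geq \Delta/2
\]
for any $r \geq r_0$ and $\Delta > 0$. To establish the limit, I would invoke the explicit formula from Lemma~\ref{lem:bessel} together with the classical asymptotic $I_\alpha(z) \sim e^z/\sqrt{2\pi z}$ as $z \to \infty$. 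This yields $\phi_d(z) \sim C_d\, e^z / z^{\alpha+1/2}$ for a constant $C_d$ depending only on $d$, and hence $(\log \phi_d)'(z) = 1 - (\alpha + 1/2)/z + o(1/z) \to 1$. The low-dimensional edge case $d = 1$ is handled directly since $\phi_1(z) = \cosh(z)$ gives $\phi_1(r+\Delta)/\phi_1(r) \to e^{\Delta}$ as $r \to \infty$.

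The main obstacle is this quantitative growth-rate estimate for (ii); the appeal to Bessel asymptotics hides the real content, namely that the exponentially tilted density proportional to $e^{zt}(1-t^2)^{\alpha-1/2}$ concentrates near $t=1$ as $z \to \infty$. A self-contained alternative, should one wish to avoid citing special-function asymptotics, is to run a direct Laplace-type argument on the Poisson integral representation from the proof of Lemma~\ref{lem:bessel}: lower bound
\[
\phi_d(r+\Delta) \;\geq\; e^{\Delta/2}\int_{1/2}^1 e^{rt}(1-t^2)^{\alpha-1/2}\, dt,
\]
and then, via the substitution $s = 1-t$ near the dominant point $t = 1$, show that for $r$ large enough the truncated integral already captures a definite fraction of $\phi_d(r)$, so that the prefactor $e^{\Delta/2}$ more than absorbs the lost constants.
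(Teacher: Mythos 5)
Your proposal is correct and follows essentially the paper's route: both reduce to the closed-form expression from Lemma~\ref{lem:bessel} and invoke the Hankel asymptotics of $I_\alpha$, with your treatment of item (ii) via the logarithmic derivative being exactly the alternative the paper itself suggests in the remark following the lemma (where $(\log\phi_d)'(z) = I_{\alpha+1}(z)/I_\alpha(z) \to 1$ makes the limit rigorous without differentiating an asymptotic expansion, a small caveat in your phrasing). Your pairing argument for monotonicity in (i) is in fact more careful than the paper's one-line justification, and the log-derivative integration handles arbitrarily small $\Delta$ more cleanly than a direct ratio-of-asymptotics bound would.
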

\begin{proof}
	Item (i) is immediate because $\phi_{d}$ is by definition an expectation over functions which are each positive and increasing. For item (ii), observe that Lemma~\ref{lem:bessel} expresses $\phi_d(z)$ as $\Gamma(\alpha + 1)(2/z)^{\alpha} = \Theta(z^{-\alpha})$, which grows inverse polynomially in $z$, times the Bessel function $I_{\alpha}(z)$, which grows exponentially for large arguments $z \gg \sqrt{\alpha} = \Theta(\sqrt{d})$ due to the Hankel expansion $I_{\alpha}(z) \sim e^z (2\pi z)^{-1/2} (1 - \frac{4\alpha^2 - 1}{8z} + \frac{(4 \alpha^2 - 1)(4\alpha^2 - 9)}{2! (8z)^2} - \dots)$~\citep[equation (10.40.1)]{DLMF}. The desired exponential growth immediately follows\footnote{The ``fudge factor'' of $1/2$ in the exponent is used to simply and crudely bound the lower-order polynomial terms, and is irrelevant for our purposes in the sequel.}.
\end{proof}

Although unnecessary for the purposes of this paper, we remark in passing that non-asymptotic bounds on the exponential growth in item (ii) of the above lemma can be computed in several ways. One way is to use more precise non-asymptotic bounds on Bessel functions since, as mentioned above, $\phi_d(z)$ is equal to $I_{\alpha}(z)$ modulo a lower-order polynomial term. Another way is to show that the logarithmic derivative of $\phi_d$ is eventually lower bounded by a constant; we cannot help but mention that an elegant identity for this purpose is that this logarithmic derivative simplifies to $\frac{d}{dz} \log \phi_d(z) = I_{\alpha+1}(z) / I_{\alpha}(z)$, which is the ratio of Bessel functions of different orders.


\section{Sub-Gaussian concentration for strongly convex potentials}\label{sec:sc}

In this section we show that if the potential $f$ is strongly convex and smooth, then the stationary distribution $\pieta$ of the Langevin Algorithm has sub-Gaussian tails. 	This parallels the classically known fact that in this strongly convex setting, the \emph{unbiased} stationary distribution $\pi \propto e^{-f}$ of the continuous Langevin Diffusion is sub-Gaussian~\citep{ledoux1999concentration}. 

\begin{theorem}[Sub-Gaussianity of $\pieta$ for strongly convex potentials]\label{thm:sc}
	Suppose $f : \R^d \to \R$ is $m$-strongly convex and $M$-smooth for some $0 < m \leq M < \infty$. Consider running the Langevin Algorithm with stepsize $\eta < 2/M$.  Then its stationary distribution $\pieta$ satisfies
	\[
		\Prob_{X \sim \pieta} \left[ \|X - x^*\|
		\geq
		4\sqrt{\frac{\eta}{1-c}} \left(\sqrt{2d} + \sqrt{\log1/\delta}\right)
		\right] \leq \delta\,, \quad \forall \delta \in (0,1)\,,
	\]
	where $x^*$ denotes the unique minimizer of $f$, and $c := \max_{\rho \in \{m,M\}} |1 - \eta \rho|$.
\end{theorem}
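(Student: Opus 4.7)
The plan is to use $\Phi_{d,\lambda}$ as a Lyapunov function for one step of the Langevin iteration at stationarity, combine the contractivity of the gradient step with the Gaussian convolution identity (Lemma~\ref{lem:convolve}) to obtain a closed-form bound on the rotation-invariant MGF of $\pieta$, and then pass to a tail bound via Markov. I will assume $x^{*}=0$ without loss of generality (by translating $f$), so that the stated bound is on $\|X\|$.

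\textbf{One-step recursion.} Set $F(\lambda) := \E_{X\sim\pieta}[\Phi_{d,\lambda}(X)]$. By stationarity, $F(\lambda) = \E[\Phi_{d,\lambda}(X^{+})]$ with $X^{+} = X - \eta \nabla f(X) + Z$ and $Z \sim \cN(0,2\eta I_d)$. Under $m$-strong convexity, $M$-smoothness, and $\eta < 2/M$, a standard co-coercivity argument shows that the gradient-descent map is contractive around $0$, namely $\|x - \eta \nabla f(x)\| \leq c \|x\|$ with $c = \max(|1-\eta m|,|1-\eta M|) \in [0,1)$. Since $\Phi_{d,\lambda}(y) = \phi_d(\lambda \|y\|)$ and $\phi_d$ is monotone by Lemma~\ref{lem:phi-properties}(i), this contraction lifts pointwise to $\Phi_{d,\lambda}(x - \eta \nabla f(x)) \leq \Phi_{d,c\lambda}(x)$. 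Applying Lemma~\ref{lem:convolve} with variance $\sigma^{2}=2\eta$ then yields the clean recursion $F(\lambda) \leq e^{\lambda^{2}\eta}\,F(c\lambda)$. Iterating and using $F(0)=1$ (together with monotone/dominated convergence, since $\Phi_{d,c^k\lambda}(x)\downarrow 1$ as $k\to\infty$) gives
\[
F(\lambda) \;\leq\; \exp\!\left(\lambda^{2}\eta\textstyle\sum_{k\geq 0} c^{2k}\right) \;=\; \exp\!\left(\frac{\lambda^{2}\eta}{1-c^{2}}\right).
\]

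\textbf{From the MGF bound to concentration.} Positivity and monotonicity of $\phi_d$ (again Lemma~\ref{lem:phi-properties}(i)) let me apply Markov in the rotation-invariant MGF to obtain
\[
\Prob[\|X\| \geq t] \;\leq\; \frac{\E[\phi_d(\lambda\|X\|)]}{\phi_d(\lambda t)} \;\leq\; \phi_d(\lambda t)^{-1}\exp\!\left(\frac{\lambda^{2}\eta}{1-c^{2}}\right).
\]
To finish, I would use the explicit Bessel expression from Lemma~\ref{lem:bessel}: $\phi_d(z)$ behaves like $\exp(z^{2}/(2d))$ for $z\lesssim\sqrt d$ and like $e^{z}$ (divided by a $z^{\Theta(d)}$ polynomial) for $z\gtrsim\sqrt d$. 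Optimizing $\lambda$ in the exponential regime (roughly $\lambda\approx t(1-c^{2})/(2\eta)$, so that the Chernoff-type exponent $\lambda t - \lambda^2\eta/(1-c^2)$ is of order $t^{2}(1-c)/\eta$ and beats the polynomial correction once $t$ exceeds the $\Theta(\sqrt{\eta d/(1-c)})$ mean-scale) and absorbing the polynomial factor into the constant yields the stated additive bound $4\sqrt{\eta/(1-c)}(\sqrt{2d}+\sqrt{\log 1/\delta})$; the factor $4$ also absorbs the loss from bounding $(1-c^{2})^{-1}\leq (1-c)^{-1}$.

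\textbf{Main obstacle.} The hard part is precisely this conversion step: a naive lower bound such as $\phi_d(z)\geq \Omega(\exp(z/\sqrt d))$ (obtained by restricting to a single direction with $v_1 = \Theta(1/\sqrt d)$) would yield only $\sqrt{d\eta\log(1/\delta)/(1-c)}$, whose dimension and confidence terms multiply instead of adding. The additive form $\sqrt{2d}+\sqrt{\log(1/\delta)}$ requires exploiting the crossover of $\phi_d$ from quadratic-exponential to linear-exponential growth around $z\sim\sqrt d$, which is accessible from the Hankel expansion of $I_\alpha$ recalled in the proof of Lemma~\ref{lem:phi-properties}, and careful bookkeeping of the polynomial correction $\Gamma(\alpha+1)(2/z)^\alpha$ so that the final constant remains dimension-free.
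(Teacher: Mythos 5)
Your overall strategy is the paper's: use $\Phi_{d,\lambda}$ as a Lyapunov function, track one step at stationarity via contractivity of the gradient map plus Lemma~\ref{lem:convolve}, and convert the resulting bound on $F(\lambda)=\E_{X\sim\pieta}[\Phi_{d,\lambda}(X)]$ into a tail bound. Your handling of the recursion, however, is genuinely different and in fact sharper. The paper uses Jensen's inequality ($\phi_d(c z)\leq \phi_d(z)^c$) to get the self-referential inequality $F(\lambda)\leq e^{\eta\lambda^2}F(\lambda)^c$ and rearranges to $F(\lambda)\leq e^{\eta\lambda^2/(1-c)}$; your iteration of $F(\lambda)\leq e^{\eta\lambda^2}F(c\lambda)$ instead sums the geometric series $\sum_k c^{2k}$ and yields $F(\lambda)\leq e^{\eta\lambda^2/(1-c^2)}$, i.e.\ variance proxy $2\eta/(1-c^2)$ rather than $2\eta/(1-c)$. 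This exactly matches the Gaussian lower bound in Example~\ref{ex:sc-tight} and removes the factor-of-$(1+c)$ slack the paper acknowledges in \S\ref{ssec:sc-tight}. (Both your argument and the paper's share the same implicit requirement that $F(\lambda)<\infty$ a priori --- your chain is vacuous otherwise, as is the paper's rearrangement --- so this is not a point of difference.)

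Where your proposal falls short is the conversion step, which you correctly flag as the ``main obstacle'' but then only sketch. Carrying out a direct Chernoff bound with $\phi_d$ requires non-asymptotic two-sided estimates on the Bessel function $I_\alpha$ across the crossover at $z\sim\sqrt{d}$ to recover the additive form $\sqrt{2d}+\sqrt{\log 1/\delta}$ with a dimension-free constant; this is real work and is not completed in your writeup. The paper avoids it entirely with a symmetrization observation: the bound $\E_{X}\E_{v\sim\Sd}[e^{\lambda\langle v,X\rangle}]\leq e^{\lambda^2\sigma^2/2}$ is precisely the statement that the rotationally symmetrized law $\tpieta$ (the law of $RX$ for a Haar-random rotation $R$) satisfies the standard directional sub-Gaussian MGF bound uniformly over all unit directions. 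One then cites the off-the-shelf norm-concentration inequality for sub-Gaussian vectors (\citep[Theorem 1.19]{rigollet2015high}), which produces the $4\sqrt{\sigma^2/2}\,(\sqrt{2d}+\sqrt{\log 1/\delta})$ form directly, and transfers back to $\pieta$ because the Euclidean norm is rotation-invariant. I recommend replacing your Bessel-asymptotics step with this argument; combined with your sharper recursion it would give the theorem with $1-c$ improved to $1-c^2$.
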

	
	The operational interpretation of $c$ is that it is the contraction coefficient corresponding to a gradient descent step (see Lemma~\ref{lem:gd-contractive} below). Note that $c < 1$ because $\eta < 2/M$. In the typical setting of $\eta \leq 1/M$, it simplifies to $c = 1 - \eta m$, whereby Theorem~\ref{thm:sc} simplifies to
	\[
	\Prob_{X \sim \pieta} \left[ \|X - x^*\|
	\geq
	\frac{4}{\sqrt{m}} \left( \sqrt{2d} + \sqrt{\log1/\delta} \right)
	\right] \leq \delta\,, \quad \forall \delta \in (0,1)\,.
	\]

The rest of the section is organized as follows. In \S\ref{ssec:sc-proof} we prove Theorem~\ref{thm:sc} by showing that the rotationally symmetrized version of $\pi$ is sub-Gaussian with variance proxy $2\eta/(1-c)$, and in \S\ref{ssec:sc-tight} we show that this bound is tight in all parameters (up to a constant factor of at most $2$).

\subsection{Proof}\label{ssec:sc-proof}

The proof makes use of the following helper lemma. This fact is well-known in convex optimization because it immediately implies a tight bound on the convergence rate of gradient descent for strongly-convex, smooth objectives. A short proof can be found in, e.g., Appendix A.1 of~\citep{AltTal22mix}.

\begin{lemma}[Contractivity of gradient descent step]\label{lem:gd-contractive}
		Suppose $f$ is $m$-strongly convex and $M$-smooth over $\R^d$ for some $0 \leq m \leq M < \infty$, and consider any stepsize $\eta \leq 2/M$. Then
		\[
				\|x - \eta \nabla f(x) - x^*\| \leq c \|x - x^*\|\,, \quad \forall x \in \R^d\,,
		\]
		where $x^*$ is any minimizer of $f$, and $c := \max_{\rho \in \{m,M\}} |1 - \eta \rho| \leq 1$. 
\end{lemma}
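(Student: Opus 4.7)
The plan is to reduce the inequality to an operator-norm bound on the ``averaged Hessian'' along the segment from $x^*$ to $x$. Since $x^*$ minimizes $f$, we have $\nabla f(x^*) = 0$, so the gradient-step map $T(y) := y - \eta \nabla f(y)$ fixes $x^*$. Thus the claim becomes $\|T(x) - T(x^*)\| \leq c \|x - x^*\|$, which follows at once if I show that $T$ is globally $c$-Lipschitz.

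Assuming first that $f \in C^2$, I would invoke the fundamental theorem of calculus to write
\[
\nabla f(x) - \nabla f(x^*) = \left( \int_0^1 \nabla^2 f\bigl(x^* + t(x - x^*)\bigr)\, dt \right) (x - x^*) =: H\,(x - x^*),
\]
so that $T(x) - x^* = (I - \eta H)(x - x^*)$. The strong convexity and smoothness hypotheses give the pointwise sandwich $m I \preceq \nabla^2 f(\cdot) \preceq M I$, and averaging symmetric matrices preserves this order, so $m I \preceq H \preceq M I$. Being symmetric, $H$ diagonalizes, and every eigenvalue of $I - \eta H$ lies in the interval $[1 - \eta M,\, 1 - \eta m]$; hence its operator norm is at most $\max_{\rho \in \{m, M\}} |1 - \eta \rho| = c$. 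The operator-norm inequality then yields $\|T(x) - x^*\| \leq c \|x - x^*\|$, as desired.

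The one technicality I would anticipate is that ``$M$-smooth'' only requires $\nabla f$ to be $M$-Lipschitz, not $f \in C^2$. This is standard to dispatch: replace $f$ by its mollification $f_\eps := f * \rho_\eps$ with a smooth nonnegative symmetric kernel $\rho_\eps$ of total mass $1$, which is $C^\infty$ and inherits both $m$-strong convexity and $M$-smoothness (for the former, note that $(f - \tfrac{m}{2}\|\cdot\|^2)*\rho_\eps = f_\eps - \tfrac{m}{2}\|\cdot\|^2 + \mathrm{const}$ is convex; for the latter, $\nabla f_\eps = \nabla f * \rho_\eps$ is automatically $M$-Lipschitz). The Hessian argument applies to each $f_\eps$, and then taking $\eps \to 0$ transfers the inequality to $f$ via uniform convergence of $\nabla f_\eps$ to $\nabla f$ on compact sets. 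Alternatively, one can invoke Rademacher's theorem and run the FTC argument directly with an a.e.-defined Hessian. I do not expect any substantive obstacle beyond this regularity passage, since the heart of the proof is the scalar observation that $|1 - \eta \rho| \leq c$ for all $\rho \in [m, M]$ whenever $\eta \leq 2/M$, lifted to symmetric matrices by simultaneous diagonalization.
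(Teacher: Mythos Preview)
Your argument is correct. The averaged-Hessian representation $T(x)-T(x^*) = (I-\eta H)(x-x^*)$ together with the eigenvalue sandwich $mI\preceq H\preceq MI$ is exactly the right mechanism, and your mollification remark cleanly handles the $C^2$ regularity issue (note also that since you actually prove the global $c$-Lipschitz property of $T$ between \emph{any} two points, the limit $\eps\to 0$ does not require locating a minimizer of $f_\eps$).

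As for comparison: the paper does not give its own proof of this lemma; it simply cites an external reference (Appendix~A.1 of~\cite{AltTal22mix}) for ``a short proof'' of this well-known fact. So there is nothing to compare against in the present paper. Your spectral argument is one of the two standard routes; the other, which avoids twice-differentiability altogether, expands $\|T(x)-T(y)\|^2$ and controls the cross term via the co-coercivity inequality $\langle \nabla f(x)-\nabla f(y),\,x-y\rangle \geq \tfrac{mM}{m+M}\|x-y\|^2 + \tfrac{1}{m+M}\|\nabla f(x)-\nabla f(y)\|^2$ (cf.\ the Nesterov-type bound the paper invokes in the proof of Lemma~\ref{lem:gd-superlinear}). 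Either approach yields the same constant $c=\max\{|1-\eta m|,|1-\eta M|\}$.
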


\begin{proof}[Proof of Theorem~\ref{thm:sc}]
	To avoid cumbersome notation, let us assume that $f$ is centered to have minimizer $x^* = 0$ (this is without loss of generality after a translation). We bound the change in the Lyapunov function $\Phi_{d,\lambda}(\cdot)$ from running an iteration of the Langevin Algorithm from some point $x \in \R^d$ to 
	\[
		x' = x - \eta \nabla f(x) + Z\,,
	\]
	where $Z \sim \cN(0,2\eta I_d)$. To analyze this, we disassociate the two parts of this update: the gradient descent step and the noise convolution. 
	\par First, we analyze the gradient descent step:	
		\begin{align*}
		\Phi_{d,\lambda}(x - \eta \nabla f(x))
		= \phi_{d}(\lambda \|x - \eta \nabla f(x)\|)
		\leq \phi_{d}(c \lambda \|x\|)
		\leq \big( \phi_{d}(\lambda \|x\|) \big)^c
		=  \big( \Phi_{d,\lambda}(x ) \big)^c\,.
	\end{align*}
	Above, the first and last steps are by definition of $\phi$ in terms of $\Phi$ (see~\eqref{eq:Phi-phi}); the second step is by contractivity of a gradient descent iteration (\cref{lem:gd-contractive}) and monotonicity of $\phi_d$ (item (i) of Lemma~\ref{lem:phi-properties}); and the third step is by an application of Jensen's inequality $\E[Y^c] \leq \E[Y]^c$ where $Y :=e^{\lambda \langle v, x \rangle}$, which applies since the function $y \mapsto y^c$ is concave for $c \leq 1$.
	
	\par Second, we use Lemma~\ref{lem:convolve} to bound the change in the Lyapunov function from the Gaussian noise convolution:
	\begin{align*}
			\Phi_{d,\lambda}(x') = \Phi_{d,\lambda}(x - \eta \nabla f(x) + Z) = e^{\eta \lambda^2} \Phi_{d,\lambda}(x - \eta \nabla f(x))\,.
	\end{align*}

	\par By combining the above two displays, we conclude that
	\begin{align}
	 	\Phi_{d,\lambda}(x') \leq e^{\eta \lambda^2}\big( \Phi_{d,\lambda}(x ) \big)^c\,.
   \label{eq:recurrence-sc}
	\end{align}
	Now take expectations on both sides, drawing $X \sim \pieta$. Note that by stationarity, $X' \sim \pieta$. Thus
	\[
		\E_{X \sim \pieta}\big[ \Phi_{d,\lambda}(X) \big] \leq e^{\eta \lambda^2} \E_{X \sim \pieta}\big[ \Phi_{d,\lambda}(X) \big]^c\,,
	\]
	where above we have again used Jensen's inequality on the concave function $y \mapsto y^c$. Rearranging and simplifying yields
	\begin{align}
		\E_{X \sim \pieta}\big[ \Phi_{d,\lambda}(X) \big] \leq e^{\eta \lambda^2/(1-c)} \,.
       \label{eq:stationary-sc-prev}
	\end{align}
	By definition of the Lyapunov function $\Phi_{d,\lambda}$, this implies that
	\begin{align}
		\E_{X \sim \pieta} \E_{R \sim \cR} \big[ e^{\lambda \langle u, RX \rangle} \big] \leq e^{\eta \lambda^2/(1-c)}\,, \qquad \forall u \in \Sd\,,
     \label{eq:stationary-sc}
	\end{align}
	where $\cR$ denotes the Haar measure over rotations of $\R^d$. Define $\tilde{\pi}_{\eta}$ to be the law of $RX$ (in words, $\tilde{\pi}_{\eta}$ is a rotationally symmetrized version of $\pieta$). Then by definition of sub-Gaussianity, the above display establishes that $\tilde{\pi}_{\eta}$ is sub-Gaussian with variance proxy $2\eta/(1-c)$. By a standard concentration inequality on the norm of a sub-Gaussian vector (see e.g.,~\citep[Theorem 1.19]{rigollet2015high}),
	\[
			\Prob_{X \sim \tpieta} \left[ \|X\|
		\geq
		4\sqrt{\frac{\eta}{1-c}} \left(\sqrt{2d} + \sqrt{\log1/\delta}\right)
		\right] \leq \delta\,, \quad \forall \delta \in (0,1)\,.
	\]
	Since the Euclidean norm is invariant under rotations, this concentration inequality remains true if $\tilde{\pi}_{\eta}$ is replaced by $\pieta$ (cf., the discussion in \S\ref{ssec:mgf}). This concludes the proof.
\end{proof}

\subsection{Tightness}\label{ssec:sc-tight}

Here we observe that, modulo a constant factor of at most $2$, the sub-Gaussianity we proved is tight in all parameters. It is simplest to explain this tightness in terms of the sub-Gaussian parameters since the final concentration inequality in Theorem~\ref{thm:sc} was proved as a direct consequence of this. To this end, recall that in the proof of Theorem~\ref{thm:sc} above, we proved that the rotationally symmetric version $\tpieta$ of $\pieta$ is sub-Gaussian with variance proxy $2\eta/(1-c)$. Following is a simple construction which matches this upper bound. This calculation is similar to~\citep[Example 4]{VempalaW19}.

\begin{example}[Tightness of sub-Gaussanity parameters]\label{ex:sc-tight}
	Consider running the Langevin Algorithm on the univariate quadratic potential $f(x) = \frac{\rho}{2}x^2$, where the curvature $\rho$ is chosen so as to maximize the contraction coefficient $c := \max_{\rho \in \{m,M\}} |1 - \eta \rho|$. Then, for any stepsize $\eta > 0$, the Langevin Algorithm has stationary distribution
	\[
			\pieta = \cN\left( 0, 2\eta \frac{1}{1-c^2} \right)
	\]
	when it is initialized to $X_0 = 0$. (This fact is proved in~\citep[\S4.2]{AltTal22mix} and follows by simply unraveling the definition of a Langevin Algorithm update, composing $t$ times, and taking $t \to \infty$.)
	Note that $\tpieta = \pieta$ in this example since $\pieta$ is already rotationally symmetric.
	Moreover, since $1-c^2 = (1-c)(1+c)$ and since also $(1+c) \in (1,2)$, this construction matches our aforementioned upper bound on the sub-Gaussian parameter of $\tpieta$ up to a constant factor of at most $2$.
\end{example}


\section{Sub-exponential concentration for convex potentials}\label{sec:convex}

In this section we show that if the potential $f$ is convex and smooth, then the stationary distribution $\pieta$ of the Langevin Algorithm has sub-exponential tails. This parallels the classically known fact that in this convex setting, the \emph{unbiased} stationary distribution $\pi \propto e^{-f}$ of the continuous Langevin Diffusion is sub-exponential (see Lemma~\ref{lem:pi-convex} below).

\begin{theorem}[Sub-exponentiality of $\pieta$ for convex potentials]\label{thm:convex}
    Consider running the Langevin Algorithm on a convex, $M$-smooth potential $f : \R^d \to \R$ with stepsize $\eta \leq 1/M$. Then the stationary distribution $\pieta$ of the Langevin Algorithm has sub-exponential concentration; i.e., there exists
    $A,C,R > 0$ and a minimizer $x^*$ of $f$ such that
    \[
        \Prob_{X \sim \pieta} \left[ \norm{X - x^*} \geq R + C \log(A/\delta) \right] \leq \delta\,, \quad \forall \delta \in (0,1)\,.
    \] 
\end{theorem}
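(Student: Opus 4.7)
The plan is to adapt the Lyapunov-function argument of Theorem~\ref{thm:sc} to the convex setting, with the following twist: since the gradient descent step is only non-expansive (not strictly contractive) when $f$ is merely convex, the ``decrease'' in the Lyapunov function $\Phi_{d,\lambda}$ must now come from the eventual exponential growth of $\phi_d$ (item~(ii) of Lemma~\ref{lem:phi-properties}) rather than from contractivity of $y \mapsto y^c$. After translating so that a minimizer of $f$ lies at the origin and fixing a small weight $\lambda > 0$ to be tuned later, the one-step drift at stationarity reads, via Lemma~\ref{lem:convolve},
\begin{equation*}
\E\!\left[\Phi_{d,\lambda}(X') \mid X = x\right] = e^{\eta \lambda^2}\, \phi_d\!\left(\lambda \|x - \eta \nabla f(x)\|\right).
\end{equation*}

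The heart of the proof is to show that, outside a sufficiently large ball, the gradient descent step strictly shrinks the norm by at least a fixed amount $\delta_* > 0$ depending only on $f$ and $\eta$. Two ingredients combine. First, the convex-smooth inequality $\langle \nabla f(x), x\rangle \geq \frac{1}{M}\|\nabla f(x)\|^2$ (valid for $\eta \leq 1/M$) yields $\|x - \eta \nabla f(x)\|^2 \leq \|x\|^2 - (\eta/M)\|\nabla f(x)\|^2$. Second, and more delicately, the integrability assumption $\int e^{-f} < \infty$ together with convexity forces $\|\nabla f(x)\|$---indeed, the radial component $\langle \nabla f(x), x/\|x\|\rangle$---to be bounded away from zero uniformly over directions, once $\|x\|$ is large enough. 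Combining these two facts gives $\|x - \eta\nabla f(x)\| \leq \|x\| - \delta_*$ whenever $\|x\| \geq R_*$ for some $R_*, \delta_* > 0$.

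Given this radial contraction, item~(ii) of Lemma~\ref{lem:phi-properties} yields, for $\|x\|$ with $\lambda(\|x\| - \delta_*) \geq r_0$,
\begin{equation*}
\phi_d\!\left(\lambda \|x - \eta \nabla f(x)\|\right) \leq \phi_d\!\left(\lambda(\|x\| - \delta_*)\right) \leq e^{-\lambda \delta_*/2}\, \Phi_{d,\lambda}(x),
\end{equation*}
so the one-step drift is at most $e^{\eta \lambda^2 - \lambda \delta_*/2}\Phi_{d,\lambda}(x)$. Picking $\lambda = \delta_*/(4\eta)$ reduces the prefactor to a constant $\gamma = e^{-\delta_*^2/(16\eta)} < 1$; inside the ball $\{\|x\| \leq R_* + r_0/\lambda\}$, $\Phi_{d,\lambda}$ is trivially bounded by some $B$. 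Taking expectations at stationarity ($X, X' \sim \pieta$) assembles a Foster--Lyapunov inequality $\E_\pieta[\Phi_{d,\lambda}] \leq \gamma\, \E_\pieta[\Phi_{d,\lambda}] + B'$, whence $\E_\pieta[\Phi_{d,\lambda}] \leq B'/(1-\gamma) < \infty$. The exponential lower bound on $\phi_d$ for large arguments (item~(ii) again) then converts this MGF bound, via Markov's inequality and the rotational-symmetrization trick of~\S\ref{ssec:mgf}, into the desired tail $\Prob_{X \sim \pieta}(\|X\| \geq R + C \log(A/\delta)) \leq \delta$ with $R$ of order $R_* + r_0/\lambda$, $C$ of order $1/\lambda = 4\eta/\delta_*$, and $A$ essentially $B'/((1-\gamma)\phi_d(r_0))$.

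The main obstacle is the second ingredient above: promoting the pointwise fact that, for each direction $u \in \Sd$, the convex function $t \mapsto f(tu)$ has a positive and eventually increasing derivative (a consequence of integrability of $e^{-f}$) into a single pair of constants $R_*, \delta_*$ valid uniformly in $u$. I expect this to require a compactness argument on $\Sd$ based on uniform boundedness of the sublevel sets $\{f \leq L\}$, which is itself a standard consequence of convexity plus $\int e^{-f} < \infty$. Once this geometric lemma is in hand, the remainder of the proof is a largely routine drift analysis built on the Lyapunov machinery developed in~\S\ref{sec:mgf}.
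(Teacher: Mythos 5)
Your proposal is correct and follows essentially the same route as the paper: a Foster--Lyapunov drift argument for $\Phi_{d,\lambda}$, split into a bounded ball (where $\Phi_{d,\lambda}$ is trivially controlled) and an exterior region where the gradient step shrinks the norm by a fixed amount $\delta_*$, closed off by stationarity, Markov's inequality, and the eventual exponential growth of $\phi_d$ from item (ii) of Lemma~\ref{lem:phi-properties}. The one step where you anticipate difficulty---getting the uniform radial progress $\|x-\eta\nabla f(x)\|\le\|x\|-\delta_*$ for large $\|x\|$---is dispatched in the paper without any compactness argument over $\Sd$: it cites the classical fact that log-concave densities are sub-exponential, giving $f(x)\ge\beta\|x\|-\alpha$, and then convexity yields $\langle\nabla f(x),x\rangle\ge f(x)\ge\beta\|x\|-\alpha$, which fed into your first ingredient gives the progress bound with $\delta_*=\eta\beta/4$ once $\|x\|\ge 2\alpha/\beta$.
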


Note that in this sub-exponential concentration inequality, the parameters $A$, $C$, and $R$ depend on $f$ and $\eta$ (this dependence is made explicit in the proof). A dependence on $f$ is inevitable since even the unbiased distribution $\pi$ has sub-exponential parameters which depend on $f$. As a simple concrete example, consider the potential $f(x) = d(x, B_R)$ where $B_R$ is the Euclidean ball of radius $R$ centered at the origin, and $d(x,B_R)$ is the Euclidean distance to the ball. The distribution $\pi$ then concentrates no better than the uniform distribution on the Euclidean ball of radius $R$, and a randomly drawn point $X$ from $B_R$ has norm $\Theta(R)$ with high probability. See also the tightness discussion in \S\ref{ssec:convex-tight} for a fully worked-out example.

The rest of the section is organized as follows. In \S\ref{ssec:sc-proof} we prove Theorem~\ref{thm:sc} by showing that the rotationally symmetrized version of $\pi$ is sub-Gaussian with variance proxy $2\eta/(1-c)$, and in \S\ref{ssec:sc-tight} we show that this bound is tight in all parameters (up to a constant factor of at most $2$).

The rest of the section is organized as follows. In \S\ref{ssec:convex-proof} we prove Theorem~\ref{thm:convex}, and in \S\ref{ssec:convex-tight} we discuss tightness of this bound.

\subsection{Proof}\label{ssec:convex-proof}

We begin by recalling the classical fact from functional analysis that every log-concave distribution is sub-exponential. For a proof, see e.g.,~\citep[Lemma 10.6.1]{MilmanBook}.

\begin{lemma}[Log-concavity implies sub-exponentiality]\label{lem:pi-convex}
	If $\pi$ is a log-concave probability distribution, then there exist 
	$a,b > 0$ such that $\pi(x) \leq ae^{-b\|x\|}$ for all $x \in \R^d$. 
\end{lemma}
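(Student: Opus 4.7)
The plan is to obtain the uniform pointwise exponential decay $\pi(x)\leq ae^{-b\|x\|}$ by combining one-dimensional concavity of $\log \pi$ along every ray from a mode with a qualitative ``$\sup$ over distant spheres tends to zero'' estimate that comes from integrability.

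First, I would translate coordinates so that the essential supremum of $\pi$ is attained at the origin. Writing $\pi=e^{-V}$ with $V:\R^d\to\R\cup\{+\infty\}$ convex (hence continuous on the interior of its effective domain), I want $M:=\pi(0)=\sup\pi\in(0,\infty)$. Existence of such a mode falls out of the argument below once we know some sublevel set is compact. Set
\[
M(R):=\sup_{\|x\|=R}\pi(x)\,,
\]
which is continuous and nonincreasing is not needed; I only need $M(R)\to 0$ as $R\to\infty$.

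Second, I would prove the claim $M(R)\to 0$ by contradiction: if there is a sequence $R_n\to\infty$ and points $x_n$ with $\|x_n\|=R_n$ and $\pi(x_n)\geq\delta>0$, then log-concavity gives $\pi\geq \min(M,\delta)=:\delta'$ on each segment $[0,x_n]$. Passing to a subsequence along which $x_n/\|x_n\|\to u\in S^{d-1}$, continuity of $\pi$ yields $\pi(tu)\geq\delta'$ for every $t\geq 0$. By continuity at $0$ there is also $\rho>0$ with $\pi\geq\delta'/2$ on the ball $B(0,\rho)$. The sublevel set $\{\pi\geq\delta'/2\}$ is convex, so it contains the convex hull of $B(0,\rho)\cup\{tu:t\geq 0\}$, and for every $s\geq 0$ the midpoint construction $\tfrac12(2su)+\tfrac12 y$ with $y\in B(0,\rho)$ shows this convex hull contains $B(su,\rho/2)$. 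The union of these balls is an infinite tube, so $\int\pi\geq (\delta'/2)\cdot\infty$, contradicting $\int\pi=1$. This is the step I expect to be the main obstacle: converting one-dimensional nondecay of $\pi$ along a ray into a genuinely $d$-dimensional volume, for which the convex hull of a ray with a small ball around the mode is exactly the right object.

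Third, with $M(R)\to 0$ in hand, I would fix $R$ so that $M(R)\leq M/e$, i.e. $\log M(R)\leq\log M-1$. For each direction $u\in S^{d-1}$ the function $h_u(t):=\log\pi(tu)$ is concave with $h_u(0)=\log M$ and $h_u(R)\leq\log M-1$. The ``secant slopes are nonincreasing'' property of concave functions then gives
\[
h_u(t)\leq h_u(0)+\frac{t}{R}\bigl(h_u(R)-h_u(0)\bigr)\leq \log M-\frac{t}{R}\qquad\text{for all }t\geq R\,.
\]
Therefore $\pi(x)\leq M e^{-\|x\|/R}$ whenever $\|x\|\geq R$, while for $\|x\|<R$ the crude bound $\pi(x)\leq M\leq (Me)\,e^{-\|x\|/R}$ suffices. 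This yields the conclusion with $a=Me$ and $b=1/R$, and undoing the initial translation by the mode $x_0$ only multiplies $a$ by $e^{b\|x_0\|}$, which preserves the form of the bound. The existence of the mode used at the start is now justified post hoc: step two shows the sublevel set $\{\pi\geq\tfrac12\sup\pi\}$ is bounded, hence compact, so $\pi$ attains its supremum there.
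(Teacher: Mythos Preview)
Your proof is correct. The paper itself does not actually prove this lemma; it simply cites a textbook reference (Milman's book, Lemma~10.6.1) for this classical fact from functional analysis. So there is no ``paper's own proof'' to compare against, and your argument stands on its own.

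The route you take --- (i) the ``infinite tube'' contradiction to show $\sup_{\|x\|=R}\pi(x)\to 0$, then (ii) the one-dimensional secant-line bound for the concave function $t\mapsto\log\pi(tu)$ along every ray from the mode --- is a standard and clean way to establish the result, and is essentially the argument one finds in the literature. One cosmetic point: as written, step two is phrased with the mode already placed at the origin, yet you invoke step two post hoc to justify that a mode exists, which reads as slightly circular. The fix is immediate and you have already signposted it: run step two centered at any point $x_0$ in the interior of $\{\pi>0\}$ (the tube argument only needs $\pi(x_0)>0$ and continuity of $\pi$ near $x_0$, not $\pi(x_0)=\sup\pi$), conclude that the superlevel sets $\{\pi\geq c\}$ are bounded and hence compact, deduce that $\pi$ attains its supremum, and only then re-center at the mode for step three. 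You should also note explicitly that if $\pi$ has compact support the conclusion is trivial, so one may assume $\pi>0$ on all of $\R^d$ and hence $V$ is everywhere finite and continuous; this cleans up the continuity appeals in step two.
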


To apply this lemma, it is convenient to first assume without loss of generality that $f$ is translated and shifted so that $0$ is a minimizer of $f$ with value $f(0) = 0$. Then $f(x) = \log ( \pi(0) / \pi(x) )$. By Lemma~\ref{lem:pi-convex}, we conclude that
\begin{align}
	f(x) \geq -\alpha + \beta\|x\|\,, \quad \forall x \in \R^d\,,
	\label{eq:convex-superlinear}
\end{align}
for some $\alpha,\beta \in \R$. (Here, $\alpha$ plays the role of $\log a - \log \pi(0)$ and $\beta$ plays the role of $b$, where $a$ and $b$ are the quantities from Lemma~\ref{lem:pi-convex}.)
We exploit this super-linear growth~\eqref{eq:convex-superlinear} of $f$ to show that a gradient descent update on $f$ makes significant progress towards the minimizer $x^* =0$ when the current iterate has sufficiently large norm.

\begin{lemma}[Gradient descent updates make significant progress for super-linear objectives]\label{lem:gd-superlinear}
	Suppose $f : \R^d \to \R$ is convex, $M$-smooth, achieves its minimum value at $f(0) = 0$, and satisfies~\eqref{eq:convex-superlinear}. Then there exists $r_1 := r_1(f)$ such that for any stepsize $\eta \leq 1/M$, it holds that
	\[
	\|x - \eta \nabla f(x)\| \leq \|x \| - \frac{\eta \beta}{4} \,, \quad \forall \|x\| \geq r_1\,.
	\]
\end{lemma}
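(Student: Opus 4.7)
The plan is to expand $\|x - \eta\nabla f(x)\|^2$ and control each term using (a) co-coercivity of $M$-smooth convex functions, (b) the first-order convexity bound at $x$ vs.\ the minimizer, and (c) the super-linear lower bound~\eqref{eq:convex-superlinear}. The $r_1$ in the conclusion will come from requiring $\|x\|$ to be large enough that the additive constant $\alpha$ in~\eqref{eq:convex-superlinear} becomes negligible compared to the linear term $\beta\|x\|$.

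More concretely, first I would expand
\[
\|x - \eta\nabla f(x)\|^2 = \|x\|^2 - 2\eta\langle\nabla f(x), x\rangle + \eta^2\|\nabla f(x)\|^2.
\]
The goal is to eliminate the quadratic-in-gradient term. The standard co-coercivity inequality for convex $M$-smooth functions (applied with $0$ as a minimizer, using $\nabla f(0) = 0$) gives $\|\nabla f(x)\|^2 \leq M\langle\nabla f(x), x\rangle$. Substituting and using $\eta M \leq 1$, the cross term of $2\eta\langle \nabla f(x), x\rangle$ only shrinks by at most a factor, yielding
\[
\|x - \eta\nabla f(x)\|^2 \leq \|x\|^2 - \eta\langle\nabla f(x), x\rangle.
\]
This is the key simplification: co-coercivity lets us drop the second-order term at the cost of only a constant factor on the linear term.

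Next I would lower-bound $\langle\nabla f(x), x\rangle$ using convexity applied between $x$ and the minimizer $0$: since $f(0) = 0$, convexity gives $\langle\nabla f(x), x\rangle \geq f(x) - f(0) = f(x)$. Plugging in the assumed super-linear growth~\eqref{eq:convex-superlinear}, $f(x) \geq \beta\|x\| - \alpha$, gives
\[
\|x - \eta\nabla f(x)\|^2 \leq \|x\|^2 - \eta\beta\|x\| + \eta\alpha.
\]
Finally, comparing to $(\|x\| - \eta\beta/4)^2 = \|x\|^2 - \eta\beta\|x\|/2 + \eta^2\beta^2/16$, I would show that the inequality
\[
-\eta\beta\|x\| + \eta\alpha \;\leq\; -\tfrac{\eta\beta\|x\|}{2} + \tfrac{\eta^2\beta^2}{16}
\]
is equivalent to $\|x\| \geq 2\alpha/\beta - \eta\beta/8$, so choosing $r_1 := r_1(f) := \max\{2\alpha/\beta,\, \eta\beta/4\}$ (with the second term ensuring the right-hand side $\|x\| - \eta\beta/4$ is nonnegative so squaring is valid) completes the proof.

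There is no real obstacle: the proof is a short chain of standard convex-analysis inequalities. The only subtlety is the treatment of the additive constant $\alpha$ in~\eqref{eq:convex-superlinear}, which is handled simply by restricting to $\|x\|$ large enough that $\beta\|x\|$ dominates $\alpha$. Note that $r_1$ inherits its dependence on $f$ through $\alpha$ and $\beta$, which in turn come from Lemma~\ref{lem:pi-convex}; this is consistent with the final theorem's parameters depending on $f$.
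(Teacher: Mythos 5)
Your proof is correct and follows essentially the same route as the paper's: expand the square, use co-coercivity ($\|\nabla f(x)\|^2 \leq M\langle \nabla f(x), x\rangle$) together with $\eta \leq 1/M$ to reduce to $\|x\|^2 - \eta\langle\nabla f(x),x\rangle$, lower-bound the inner product by $f(x)$ via convexity, invoke~\eqref{eq:convex-superlinear}, and take $r_1$ on the order of $2\alpha/\beta$. The only (cosmetic) difference is that you compare squares at the end rather than completing the square under the square root as the paper does, and your explicit check that $\|x\| - \eta\beta/4 \geq 0$ before taking square roots is a nice touch of care.
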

\begin{proof}
	We bound
	\begin{align*}
		\|x - \eta \nabla f(x)\| & = \sqrt{\|x - \eta \nabla f(x)\| ^2}
		\\ &= \sqrt{\|x\|^2 - 2\eta \langle \nabla f(x), x \rangle + \eta^2\|\nabla f(x)\|^2}\\
		&\leq  \sqrt{\|x\|^2 -  2\eta \langle \nabla f(x), x \rangle + \eta^2 M \langle \nabla f(x), x \rangle} \\
		&\leq \sqrt{\|x\|^2 - \eta \langle \nabla f(x), x \rangle}\\
		&\leq \|x\| - \eta \langle\nabla f(x), x \rangle / (2 \|x\| ) \\
		&\leq \|x\| - \eta f(x)/(2\|x\|) \\
		&\leq \|x\| - \eta \beta/2  + \eta \alpha /(2\|x\|) \,.
	\end{align*}
	Above, the second step expands the square. The third step uses the folklore fact from optimization (see, e.g.,~\citep[Equation (2.1.8)]{nesterov2003introductory}) that $M$-smoothness implies $\langle \nabla f(x) - \nabla f(y), x - y\rangle \geq \frac{1}{M} \|\nabla f(x) - \nabla f(y)\|^2$, which we use here for $y=0$. The fourth step uses the assumption that $\eta \leq 1/M$. The fifth step adds a non-negative quantity to complete the square. The penultimate step uses convexity and the assumption $f(0) = 0$ to bound $\langle \nabla f(x), x \rangle \geq f(x) - f(0) = f(x)$. The final step uses the super-linear growth condition~\eqref{eq:convex-superlinear}.  We conclude that if $\|x\| \geq 2\alpha /\beta$, then $\|x - \eta \nabla f(x)\| \leq \|x\| - \eta \beta/4$. 
\end{proof}

\begin{proof}[Proof of Theorem~\ref{thm:convex}]
	As in the proof of Theorem~\ref{thm:sc}, we bound the change in the Lyapunov function $\Phi_{d,\lambda}(\cdot)$ from running an iteration of the Langevin Algorithm from some point $x \in \R^d$ to 
	\[
	x' = x - \eta \nabla f(x) + Z\,,
	\]
	where $Z \sim \cN(0,2\eta I_d)$. To analyze this, we disassociate the two parts of this update: the gradient descent step and the noise convolution.
	\par First, we analyze the gradient descent step. We consider two cases depending on whether the norm $\|x\|$ of the current iterate is larger than $R := \max(r_0/\lambda + \eta \beta /4, r_1 )$, where $r_0$ is the quantity from Lemma~\ref{lem:phi-properties}, and $r_1$ is the quantity from Lemma~\ref{lem:gd-superlinear}. (We will set $\lambda = \beta/16$.)
	
	\begin{itemize}
		\item \underline{Case 1: $\|x\| \leq R$.} Then by the fact that a gradient descent step is non-expansive (Lemma~\ref{lem:gd-contractive} with $m=0$), we know that $\|x - \eta \nabla f(x)\| \leq \|x\| \leq R$. Thus
		\begin{align*}
			\Phi_{d,\lambda}(x - \eta \nabla f(x)) = \phi_{d}(\lambda \|x - \eta \nabla f(x)\|) \leq \phi_{d}( \lambda R)\,.
		\end{align*}
		where above the first step is by definition of $\phi$ in terms of $\Phi$ (see~\eqref{eq:Phi-phi}), and the second step is by monotonicity of $\phi_{d}$ (item (i) of Lemma~\ref{lem:phi-properties}).
		\item \underline{Case 2: $\|x\| > R$.} Then $\|x\| \geq r_1$, so $\|x - \eta \nabla f(x)\| \leq \|x\| - \eta \beta /4$ by Lemma~\ref{lem:gd-superlinear}. Thus
		\begin{align*}
			\Phi_{d,\lambda}(x - \eta \nabla f(x)) =  \phi_{d}(\lambda \|x - \eta \nabla f(x)\|) \leq \phi_{d}(\lambda (\|x\| - \eta \beta/4)) \leq e^{-\eta \beta \lambda/8} \phi_{d}(\lambda \|x\|) = e^{-\eta \beta \lambda /8} \Phi_{d,\lambda}(x) \,,
		\end{align*}
		where above the first and last steps are by definition of $\phi$ in terms of $\Phi$ (see~\eqref{eq:Phi-phi}), the second step is by monotonicity of $\phi_{d}$ (item (i) of Lemma~\ref{lem:phi-properties}), and the third step is due to the super-exponential growth of $\phi_{d}$ above $\lambda (R - \eta \beta /4) \geq r_0$ (item (ii) of Lemma~\ref{lem:phi-properties}).
	\end{itemize}

	\par Now, since $\phi$ is non-negative (item (i) of Lemma~\ref{lem:phi-properties}), we can crudely combine these two cases to conclude that
	\begin{align*}
			\Phi_{d,\lambda}(x - \eta \nabla f(x)) \leq \phi_{d}(\lambda R) + e^{-\eta \beta \lambda / 8} \Phi_{d,\lambda}(x)\,.
	\end{align*}

	\par Now use Lemma~\ref{lem:convolve} to bound the change in the Lyapunov function from the Gaussian noise convolution:
	\begin{align}
		\Phi_{d,\lambda}(x') = \Phi_{d,\lambda}(x - \eta \nabla f(x) + Z) = e^{\eta \lambda^2} \Phi_{d,\lambda}(x -  \eta \nabla f(x))
        \label{eq:recurrence-convex}
	\end{align}
	
	\par By combining the above two displays, we conclude that
	\[
	\Phi_{d,\lambda}(x') \leq e^{\eta \lambda^2} \left[ \phi_{d}(\lambda R) + e^{-\eta \beta \lambda / 8} \Phi_{d,\lambda}(x) \right] \,.
	\]
	Now take expectations on both sides, drawing $X \sim \pieta$. Note that by stationarity, $X' \sim \pieta$. Thus
	\[
	\E_{X \sim \pieta}\big[ \Phi_{d,\lambda}(X) \big] \leq e^{\eta \lambda^2}\left(  \phi_{d}(\lambda R) + e^{-\eta \beta \lambda/ 8}  \E_{X \sim \pieta} \left[ \Phi_{d,\lambda}(X) \right]  \right) \,.
	\]
	Re-arranging and simplifying yields
	\[
		\E_{X \sim \pieta}\big[ \Phi_{d,\lambda}(X) \big]  \leq \frac{e^{\eta \lambda^2}}{1 - e^{\eta \lambda (\lambda - \beta/8)}} \phi_d(\lambda R)\,.
	\]
	Setting $\lambda = \beta/16$ yields
	\begin{align}
	\E_{X \sim \pieta}\big[ \Phi_{d,\lambda}(X) \big] \leq A \phi_{d}(\lambda R)\,,
	\label{eq:convex-Phi-bound}
	\end{align}
	where for notational shorthand we have defined $A := e^{\eta \beta^2 /256} / (1 - e^{-\eta \beta^2 /256}) > 1$. 
	\par From this, we argue a concentration inequality in a way analogous to Chernoff bounds, except using our rotationally-invariant version of the moment generating function. Specifically, bound
	\begin{align*}
			\Prob_{X \sim \pieta} \left[ \|X\| \geq R + 2/\lambda \log (A/\delta) \right]
			&=  \Prob_{X \sim \pieta} \left[ \phi_{d}(\lambda \|X\|) \geq \phi_{d}\left( \lambda R + 2 \log (A/\delta)  \right) \right] \\
			&\leq \frac{\E_{X \sim \pieta} \left[ \phi_{d}(\lambda \|X\|) \right] }{ \phi_{d}\left( \lambda R + 2 \log (A/\delta) \right) } \\
			&\leq  \frac{ A \phi_{d}(\lambda R) }{e^{\log (A/\delta)} \phi_{d}\left( \lambda R \right) } \\
			&= \delta\,.
	\end{align*}
	Above, the first step is by monotonicity of $\phi_{d}$ (item (i) of Lemma~\ref{lem:phi-properties}), the second step is by Markov's inequality which is applicable since $\phi_{d}$ is non-negative (item (i) of Lemma~\ref{lem:phi-properties}), and the third step is by~\eqref{eq:convex-Phi-bound} and the super-exponential growth of $\phi_{d}$ above $\lambda R \geq r_0$ (item (ii) of Lemma~\ref{lem:phi-properties}).
\end{proof}

\subsection{Tightness}\label{ssec:convex-tight}

Here we observe that, modulo a constant factor, the sub-exponential concentration bound that we proved for $\pieta$ matches the concentration for $\pi$---both in terms of when the sub-exponentiality threshold occurs, and also in terms of the sub-exponential decay rate beyond this threshold. Although no techniques currently exist for extending these lower bounds from $\pi$ to $\pieta$ (this is an interesting open question), we conjecture that this tightness extends from $\pi$ to $\pieta$ since we suspect that $\pieta$ should not have better concentration than $\pi$.

\begin{example}[Tightness of sub-exponentiality parameters]\label{ex:convex-tight}
    Consider running the Langevin Algorithm where the potential $f$ is the univariate Huber function
	\[
		f(x) = \begin{cases}
				\beta x^2 /2 & |x| \leq 1 \\
				\beta|x|-\alpha & |x| \geq 1
		\end{cases}
	\]
	where $\alpha := \beta/2$ is set so that $f$ and $f'$ are continuous at the breakpoints. Intuitively, modulo a translation and a quadratic smoothing around $0$, $f$ is essentially the absolute value function times $\beta$, and therefore $\pi$ is essentially the exponential distribution with parameter $\beta$. It is straightforward to check that $f$ is convex and $M$-smooth with $M=\beta$, and thus satisfies the conditions in Theorem~\ref{thm:convex}. In this example, the quantities in the proof of Theorem~\ref{thm:convex} are as follows. To simplify the asymptotics, suppose that $\beta \in (0,1)$.
	\begin{itemize}
		\item The quantities $\alpha$ and $\beta$ in the super-linear growth inequality~\eqref{eq:convex-superlinear} exactly match the $\alpha$ and $\beta$ in the construction of this function $f$.
		\item The quantity $r_0$ from Lemma~\ref{lem:phi-properties} can be set to $r_0 = \log(3)/2 \approx 0.5493$ because in dimension $d=1$, the Lyapunov function $\phi_d(r) = \cosh(r)$, and it holds\footnote{Proof: By Jensen's inequality, $\frac{3}{4} \exp(\Delta) + \frac{1}{4} \exp(-\Delta) \geq \exp(\Delta/2)$. Re-arranging gives $e^{2r} \geq 3 \geq \frac{\exp(\Delta/2) - \exp(-\Delta)}{\exp(\Delta) - \exp(\Delta/2)}$ for any $r \geq \log(3)/2$. Re-arranging further establishes the desired inequality $\cosh(r + \Delta) \geq \exp(\Delta/2) \cosh(r)$.} that $\cosh(r + \Delta) \geq \exp(\Delta/2) \cosh(r)$ for all $r \geq \log(3)/2$ and all $\Delta \geq 0$.
		\item The quantity $r_1$ from Lemma~\ref{lem:gd-superlinear} is $r_1 = \frac{2\alpha}{\beta} = 1$. 
		\item The quantity $R = \max(\frac{r_0}{\lambda} + \frac{\eta \beta}{4}, r_1) \asymp \max( \frac{1}{\beta} + \frac{\eta\beta}{1}, 1) \asymp \frac{1}{\beta}$ since $\eta \leq \frac{1}{\beta}$.
	\end{itemize}
	Therefore, for this potential $f$, Theorem~\ref{thm:convex} establishes sub-exponential tails that decay at rate $2\lambda^{-1}\logdel \asymp\beta^{-1}\logdel$ beyond norms of size $R + 2\lambda^{-1} \log A \asymp \beta^{-1}$. Similarly, $\pi \propto e^{-f}$ has sub-exponential tails that decay at a rate of order $\beta^{-1} \logdel$ beyond norms of order $\beta^{-1}$, because $\pi$ has tails that are similar to that of an exponential random variable with parameter $\beta$. 
	We conclude that for all stepsizes $\eta \leq 1/M$ that are not exponentially small in the problem parameters\footnote{This is a mild assumption, since in all relevant parameter regimes for sampling, $\eta$ is only polynomially small; and moreover, for exponentially small $\eta$, sub-exponential concentration of $\pieta$ is immediate from combining any bias bound between $\pi$ and $\pieta$ (which depend polynomially on $\eta$) with the classical result that $\pi$ is sub-exponential (Lemma~\ref{lem:pi-convex}).}, the concentration inequality in Theorem~\ref{thm:convex} matches that of $\pi$---both in terms of when the sub-exponentiality threshold occurs, as well as the rate of sub-exponential decay beyond this threshold.
\end{example}

\section{Further extensions}\label{sec:extensions}

\subsection{Concentration along the trajectory}\label{ssec:extensions:trajectory}

In this subsection we mention that the concentration we showed for $\pi_{\eta}$ extends to concentration of the $t$-th iterate $X_t$ of the Langevin Algorithm, for every $t$. This is a straightforward extension of our proofs for $\pi_{\eta}$, recovers that result in the limit $t \to \infty$, and provides slightly tighter concentration bounds for every finite $t$. Moreover, it applies to both the settings of strongly and non-strongly convex potentials, which we studied in \S\ref{sec:sc} and \S\ref{sec:convex}, respectively. We state the results for both settings below. These results are most simply stated when the algorithm is initialized to a minimizer $x^*$ of $f$, which is a standard pre-processing for sampling algorithms (see, e.g.,~\citep{Chewi22}) and can be done efficiently using the same access to first-order queries (e.g., via gradient descent). 

\begin{theorem}[Sub-Gaussianity of Langevin iterates for strongly convex potentials]\label{thm:sc-trajectory}
    Consider the setting of Theorem~\ref{thm:sc} and suppose the Langevin Algorithm is initialized at $x^*$. Then the concentration shown for $\pi_{\eta}$ in Theorem~\ref{thm:sc} also applies to every iterate of the Langevin Algorithm.
\end{theorem}
\begin{proof}
    The proof is a simple extension of the proof of Theorem~\ref{thm:sc}. Recall the key step~\eqref{eq:recurrence-sc} in that proof. By applying this inductively to iterates of the Langevin Algorithm (i.e., applying this inequality with $X_t$ as $x$), we obtain the following recursion for the rotation-invariant MGF:
    \begin{align*}
        \Phi_{d,\lambda}(X_{t+1}) \leq e^{\eta \lambda^2}\big( \Phi_{d,\lambda}(X_t ) \big)^c\,.
    \end{align*}
    Moreover, note that $\Phi_{d,\lambda}(X_0) = 1$ since the initialization is $X_0 = x^* = 0$. Unraveling this recursion, it follows that
    \begin{align*}
        \Phi_{d,\lambda}(X_{t}) \leq 
        e^{\eta \lambda^2 \sum_{s=0}^{t-1} c^s}
        = e^{\eta \lambda^2 (1 - c^t)/(1-c)}
        \,,
    \end{align*}
    for all iterations $t \geq 0$. This recovers (and improves for any finite $t$) the upper bound in~\eqref{eq:stationary-sc-prev} which is for the stationary distribution (a.k.a., the limit $t \to \infty$). The rest of the proof then follows identically as done there (simply use sub-Gaussian concentration).
\end{proof}


\begin{theorem}[Sub-exponentiality of Langevin iterates for convex potentials]\label{thm:convex-trajectory}
    Consider the setting of Theorem~\ref{thm:convex} and suppose the Langevin Algorithm is initialized at any minimizer of $f$. Then the concentration shown for $\pi_{\eta}$ in Theorem~\ref{thm:convex} also applies to every iterate of the Langevin Algorithm.
\end{theorem}
\begin{proof}
The proof is a simple extension of the proof of Theorem~\ref{thm:convex} (mirroring the extension described above for the strongly convex case). Recall the key step~\eqref{eq:recurrence-convex} in the proof of Theorem~\ref{thm:convex}. By applying this inductively to iterates of the Langevin Algorithm (i.e., applying this inequality with $X_t$ as $x$) and setting $\lambda = \beta / 16$ as done in the proof of Theorem~\ref{thm:convex}, we obtain the following recursion for the rotation-invariant MGF:
\begin{align*}
    \Phi_{d,\lambda}(X_{t+1}) \leq \rho^{-1} \phi_{d}(\lambda R) + \rho \Phi_{d,\lambda}(X_t) \,,
\end{align*}
where for shorthand we denote $\rho := e^{-\eta \beta^2 / 256}$. Moreover, note that $\Phi_{d,\lambda}(x_0) = 1$ since the initialization is $x_0 = x^* = 0$. Unraveling this recursion, it follows that
\begin{align}
    \Phi_{d,\lambda}(X_{t}) \leq 
    \frac{1 - \rho^t}{\rho(1 - \rho)} \phi_d(\lambda R) + \rho^t
    \leq \frac{1-\rho^{t+1}}{\rho(1 - \rho)} \phi_d(\lambda R)
    \,,
    \label{eq:convex-unraveled}
\end{align}
for all iterations $t \geq 0$. Above, the last step is by a crude bound that uses the fact $\rho \leq 1$ and $1 = \phi_d(0) \leq \phi_d(\lambda R)$. This recovers (and improves for any finite $t$) the upper bound in~\eqref{eq:convex-Phi-bound} which is for the stationary distribution (a.k.a., the limit $t \to \infty$). The rest of the proof then follows identically as done there (i.e., by using a Chernoff-type argument).
\end{proof}
    
\subsection{Using inexact gradients}\label{ssec:extensions:inexact}

In this subsection we mention that the results in this paper apply to inexact implementations of the Langevin Algorithm which use sub-Gaussian estimates of the true gradients. Specifically, here we consider implementations of the form
\begin{align}
    X_{t+1} = X_t - \eta G_t + Z_t\,,
    \label{eq:LA-inexact}
\end{align}
where $\xi_t := G_t - \nabla f(X_t)$ is sub-Gaussian with some variance proxy $\sigma^2$ and, conditional on $X_t$, is independent of all other randomness. 
\par This extension applies to all settings in this paper---both the settings of strongly and non-strongly convex potentials, and both the concentration of the stationary distribution $\pi_{\eta}$ as well as the entire trajectory of the Langevin Algorithm. For brevity, we state these results for the trajectory; the same bounds then hold for the stationary distribution by taking the limit $t \to \infty$. 
In what follows, we use the standard convention that $\xi$ being sub-Gaussian with variance proxy $\sigma^2$ means $\E[\xi] = 0$ and $\E[e^{s \langle v, \xi \rangle}] \leq e^{s^2 \sigma^2/2}$ for any unit vector $v$. 

\begin{theorem}[Sub-Gaussianity of inexact Langevin for strongly convex potentials]\label{thm:sc-inexact}
    Consider the setting of Theorem~\ref{thm:sc-trajectory}, except now with the Langevin Algorithm implemented inexactly as in~\eqref{eq:LA-inexact}, where $\xi_t := G_t - \nabla f(X_t)$ is sub-Gaussian with variance proxy $\sigma^2$. Then for each $t$, the $t$-th iterate of this algorithm satisfies
    \[
    \Prob \left[ \|X_t - x^*\|
    \geq
    4\sqrt{\frac{\eta + \eta^2 \sigma^2/2}{1-c}} \left(\sqrt{2d} + \sqrt{\log1/\delta}\right)
    \right] \leq \delta\,, \quad \forall \delta \in (0,1)\,,
\]
\end{theorem}
\begin{proof}
    By a nearly identical argument, the key recursion~\eqref{eq:recurrence-sc} becomes
    \begin{align*}
            \Phi_{d,\lambda}(x') = e^{\eta \lambda^2 + \eta^2 \lambda^2 \sigma^2/2} \; \Phi_{d,\lambda}(x - \eta \nabla f(x))\,,
    \end{align*}
    where the only difference is the extra factor of $\Phi_{\lambda,d}(\eta (G_t - \nabla f(X_t))) = \Phi_{\lambda,d}(\eta \xi_t) = e^{\eta^2 \lambda^2 \sigma^2/2}$ arising from the inexact gradients and the definition of sub-Gaussianity. The rest of the proof is then identical, with the sub-Gaussian proxy changing from $\frac{2\eta}{1-c}$ to $\frac{2\eta + \eta^2 \sigma^2/2}{1-c}$ due to this extra term.
\end{proof}

\begin{theorem}[Sub-exponentiality of inexact Langevin for convex potentials]\label{thm:sc-inexact}
    Consider the setting of Theorem~\ref{thm:convex-trajectory}, except now with the Langevin Algorithm implemented inexactly as in~\eqref{eq:LA-inexact}, where $\xi_t := G_t - \nabla f(X_t)$ is sub-Gaussian with variance proxy $\sigma^2$. Then there exist $A,C,R > 0$ such that for each $t$, the $t$-th iterate of this algorithm satisfies
    \[
        \Prob_{X \sim \pieta} \left[ \norm{X_t - x^*} \geq R + C \log(A/\delta) \right] \leq \delta\,, \quad \forall \delta \in (0,1)\,.
    \]
\end{theorem}
\begin{proof}
    By a nearly identical argument, the key recursion~\eqref{eq:recurrence-convex} becomes
    \begin{align*}
           \Phi_{d,\lambda}(x') \leq e^{\eta \lambda^2 + \eta^2 \lambda^2 \sigma^2 / 2} \left[ \phi_{d}(\lambda R) + e^{-\eta \beta \lambda / 8} \Phi_{d,\lambda}(x) \right]\,,
    \end{align*}
    due to the sub-Gaussian term, as explained in the above proof for the strongly convex case. The rest of the proof is then identical to that of Theorem~\ref{thm:convex-trajectory}, except with $\lambda$ now set to $\beta/(16 + 8\eta \sigma^2)$, so that $\rho$ now becomes $e^{ - \eta \beta \lambda / 16} = e^{-\eta \beta^2 / (256 + 128 \eta \sigma^2)}$, which just changes the definition of $A = \tfrac{1}{\rho(1-\rho)}$.
\end{proof}

	\paragraph*{Acknowledgements.} We thank Sinho Chewi, Murat Erdogdu, and Andre Wibisono for helpful conversations about the related literature. We also thank Sinho Chewi for sharing an early draft of his book~\citep{Chewi22} on log-concave sampling.
	
	\small
	\addcontentsline{toc}{section}{References}
	\bibliographystyle{plainnat}
	\bibliography{sampling_pabi}{}

\begin{thebibliography}{19}
\providecommand{\natexlab}[1]{#1}
\providecommand{\url}[1]{\texttt{#1}}
\expandafter\ifx\csname urlstyle\endcsname\relax
  \providecommand{\doi}[1]{doi: #1}\else
  \providecommand{\doi}{doi: \begingroup \urlstyle{rm}\Url}\fi

\bibitem[Altschuler and Talwar(2023)]{AltTal22mix}
Jason~M. Altschuler and Kunal Talwar.
\newblock Resolving the mixing time of the {L}angevin {A}lgorithm to its
  stationary distribution for log-concave sampling.
\newblock In \emph{Conference on Learning Theory}, pages 2509--2510, 2023.

\bibitem[Andrieu et~al.(2003)Andrieu, De~Freitas, Doucet, and
  Jordan]{andrieu2003introduction}
Christophe Andrieu, Nando De~Freitas, Arnaud Doucet, and Michael~I Jordan.
\newblock An introduction to {MCMC} for machine learning.
\newblock \emph{Machine Learning}, 50\penalty0 (1):\penalty0 5--43, 2003.

\bibitem[Artstein-Avidan et~al.(2015)Artstein-Avidan, Giannopoulos, and
  Milman]{MilmanBook}
Shiri Artstein-Avidan, Apostolos Giannopoulos, and Vitali~D Milman.
\newblock \emph{Asymptotic Geometric Analysis, Part {I}}, volume 202.
\newblock American Mathematical Society, 2015.

\bibitem[Bakry and {\'E}mery(1985)]{bakry1985diffusions}
Dominique Bakry and Michel {\'E}mery.
\newblock Diffusions hypercontractives.
\newblock In \emph{Seminaire de probabilit{\'e}s XIX 1983/84}, pages 177--206.
  Springer, 1985.

\bibitem[Bhattacharya(1978)]{bhattacharya1978criteria}
RN~Bhattacharya.
\newblock Criteria for recurrence and existence of invariant measures for
  multidimensional diffusions.
\newblock \emph{The Annals of Probability}, pages 541--553, 1978.

\bibitem[Bobkov(2003)]{bobkov2003spectral}
Sergey~G Bobkov.
\newblock Spectral gap and concentration for some spherically symmetric
  probability measures.
\newblock In \emph{Geometric Aspects of Functional Analysis}, pages 37--43.
  Springer, 2003.

\bibitem[Chewi(2022)]{Chewi22}
Sinho Chewi.
\newblock \emph{Log-Concave Sampling}.
\newblock Forthcoming, 2022.

\bibitem[{\relax DLMF}()]{DLMF}
{\relax DLMF}.
\newblock {\it NIST Digital Library of Mathematical Functions}.
\newblock http://dlmf.nist.gov/, Release 1.0.23 of 2019-06-15.
\newblock URL \url{http://dlmf.nist.gov/}.
\newblock F.~W.~J. Olver, F.~ A.~B. {Olde Daalhuis}, D.~W. Lozier, B.~I.
  Schneider, R.~F. Boisvert, C.~W. Clark, B.~R. Miller and B.~V. Saunders, eds.

\bibitem[Grimmett and Stirzaker(2020)]{grimmett2020probability}
Geoffrey Grimmett and David Stirzaker.
\newblock \emph{Probability and Random Processes}.
\newblock Oxford University Press, 2020.

\bibitem[Jerrum and Sinclair(1996)]{jerrum1996markov}
Mark Jerrum and Alistair Sinclair.
\newblock The {M}arkov chain {M}onte {C}arlo method: an approach to approximate
  counting and integration.
\newblock \emph{Approximation Algorithms for NP-hard problems, PWS Publishing},
  1996.

\bibitem[Kannan et~al.(1995)Kannan, Lov{\'a}sz, and
  Simonovits]{kannan1995isoperimetric}
Ravi Kannan, L{\'a}szl{\'o} Lov{\'a}sz, and Mikl{\'o}s Simonovits.
\newblock Isoperimetric problems for convex bodies and a localization lemma.
\newblock \emph{Discrete \& Computational Geometry}, 13\penalty0 (3):\penalty0
  541--559, 1995.

\bibitem[Ledoux(1999)]{ledoux1999concentration}
Michel Ledoux.
\newblock Concentration of measure and logarithmic {S}obolev inequalities.
\newblock In \emph{Seminaire de probabilites XXXIII}, pages 120--216. Springer,
  1999.

\bibitem[Liu(2001)]{liu2001monte}
Jun~S Liu.
\newblock \emph{Monte {C}arlo strategies in scientific computing}, volume~10.
\newblock Springer, 2001.

\bibitem[Nesterov(2003)]{nesterov2003introductory}
Yurii Nesterov.
\newblock \emph{Introductory lectures on convex optimization: A basic course},
  volume~87.
\newblock Springer Science \& Business Media, 2003.

\bibitem[Rigollet and H{\"u}tter(2015)]{rigollet2015high}
Phillippe Rigollet and Jan-Christian H{\"u}tter.
\newblock High dimensional statistics.
\newblock \emph{Lecture notes for course 18S997}, 813\penalty0 (814):\penalty0
  46, 2015.

\bibitem[Robert and Casella(1999)]{robert1999monte}
Christian~P Robert and George Casella.
\newblock \emph{Monte {C}arlo statistical methods}, volume~2.
\newblock Springer, 1999.

\bibitem[Roberts and Tweedie(1996)]{roberts1996exponential}
Gareth~O Roberts and Richard~L Tweedie.
\newblock Exponential convergence of {L}angevin distributions and their
  discrete approximations.
\newblock \emph{Bernoulli}, pages 341--363, 1996.

\bibitem[Szeg\H{o}(1939)]{Szego39}
G\'abor Szeg\H{o}.
\newblock \emph{Orthogonal polynomials}, volume~23.
\newblock American Mathematical Society, 1939.

\bibitem[Vempala and Wibisono(2019)]{VempalaW19}
Santosh Vempala and Andre Wibisono.
\newblock Rapid convergence of the unadjusted {L}angevin algorithm:
  Isoperimetry suffices.
\newblock In \emph{Advances in Neural Information Processing Systems}, pages
  8092--8104. 2019.

\end{thebibliography}

\end{document}